\documentclass[12pt]{article}

\usepackage{amssymb}
\usepackage{cmupint}
\usepackage{geometry}
\usepackage{parskip} 
\usepackage{hyperref}
\usepackage{mathtools}

\DeclareSymbolFont{greeksymbols}{U}{jkpssmia}{m}{it}
\DeclareMathSymbol{\lambda}{\mathalpha}{greeksymbols}{21}
\DeclareMathSymbol{\sigma}{\mathalpha}{greeksymbols}{27}

\DeclareFontFamily{U}{matha}{\hyphenchar\font45}
\DeclareFontShape{U}{matha}{m}{n}{
      <5> <6> <7> <8> <9> <10> gen * matha
      <10.95> matha10 <12> <14.4> <17.28> <20.74> <24.88> matha12
      }{}
\DeclareSymbolFont{matha}{U}{matha}{m}{n}

\DeclareMathSymbol{\cap}           {2}{matha}{"58}
\DeclareMathSymbol{\cup}           {2}{matha}{"59}

\DeclareMathSymbol{\sim}           {3}{matha}{"12}

\DeclareFontFamily{U}{mathx}{\hyphenchar\font45}
\DeclareFontShape{U}{mathx}{m}{n}{
      <5> <6> <7> <8> <9> <10>
      <10.95> <12> <14.4> <17.28> <20.74> <24.88>
      mathx10
      }{}
\DeclareSymbolFont{mathx}{U}{mathx}{m}{n}

\DeclareMathDelimiter{\lbrace}     {4}{matha}{"74}{mathx}{"20}
\DeclareMathDelimiter{\rbrace}     {5}{matha}{"75}{mathx}{"28}

\usepackage[cal=boondoxupr,scr=kp]{mathalpha}

\everymath{\displaystyle}

\makeatletter
\AtBeginDocument{
  \check@mathfonts
  \setlength{\fontdimen17\textfont2}{\fontdimen16\textfont2}
}
\makeatother

\let\leq\leqslant
\let\geq\geqslant

\let\epsilon\varepsilon
\let\theta\vartheta
\let\phi\varphi

\newcommand{\ovl}{\overline}

\newcommand{\E}{\mathbb{E}}
\renewcommand{\P}{\mathbb{P}}

\newcommand{\R}{\mathbb{R}}

\newcommand{\F}{\mathscr{F}}

\renewcommand{\b}{\mathcal{b}}
\newcommand{\n}{\mathcal{n}}
\newcommand{\w}{\mathcal{w}}

\DeclareMathOperator{\diam}{diam}
\DeclareMathOperator{\sg}{sg}

\newcommand{\fcirc}{\tilde{f}}
\newcommand{\kcirc}{\tilde{K}}
\newcommand{\lcirc}{\ell}
\newcommand{\prefac}{(2\pi)^{-m}}

\usepackage{amsthm}

\theoremstyle{plain}

\newtheorem{theorem}{Theorem}
\newtheorem{lem}{Lemma}
\newtheorem{prop}{Proposition}

\newtheorem*{main}{Main Theorem}

\theoremstyle{definition}

\newtheorem{defn}{Definition}

\theoremstyle{remark}

\newtheorem*{rem}{Remark}

\makeatletter
\@addtoreset{lem}{section}
\makeatother

\let\mathds\mathbb


\title{Efficient uniform approximation using \\ Random Vector Functional Link networks}
\author{Palina Salanevich\\ Utrecht University\\
\texttt{p.salanevich@uu.nl}
\and
Olov Schavemaker\\ Utrecht University\\
\texttt{o.p.schavemaker@uu.nl}}

\begin{document}
\maketitle
\raggedright

\begin{abstract}
A Random Vector Functional Link (RVFL) network is a depth-2 neural network with random inner weights and biases. Only the outer weights of such an architecture are to be learned, so the learning process boils down to a linear optimization task, allowing one to sidestep the pitfalls of nonconvex optimization problems. In this paper, we prove that an RVFL with ReLU activation functions can approximate Lipschitz continuous functions in $L_\infty$ norm. To the best of our knowledge, our result is the first approximation result in $L_\infty$ norm using nice inner weights; namely, Gaussians. We give a nonasymptotic lower bound for the number of hidden-layer nodes to achieve a given accuracy with high probability, depending on, among other things, the Lipschitz constant of the target function, the desired accuracy, and the input dimension. Our method of proof is rooted in probability theory and harmonic analysis.
\end{abstract}

\section{Introduction}

In this paper, we examine the approximation capacity of the Random Vector Functional Link (RVFL) network. An RVFL is a depth-2 neural network with random inner weights and biases. More precisely, an RVFL is a random function ${N_n:\hfill \R^m \to \R}$ of the form
\begin{align*}
N_n(x)=\sum_{j=1}^n a_j\rho(\langle\w_{\mkern-1mu j},x\rangle+\b_{\mkern-1mu j}),
\end{align*}
where $\w_{\mkern-1mu j}$'s and $\b_{\mkern-1mu j}$'s are iid random variables, $\rho:\R\to\R$ is an \emph{activation function} and $a_j$'s are real numbers, chosen or learned so as to have $N_n$ be close to a target function $f$.


In spite of the simplicity of the architecture, RVFL models have been applied in signal classification and regression problems \cite{katuwal20181ensemble}, forecasting \cite{tang2018noniterative}, time-series data prediction \cite{chen1999rapid} among others; for an overview, see \cite{malik}. Yet, theoretical foundations for RVFL networks are still lacking \cite[{\S}1]{rvfl}. This paper aims to remedy this discrepancy.

Since only the outer weights of RVFL architectures need to be optimized, in practice the learning process boils down to a linear optimization task. Indeed, given training data $\{x_p\}_{p=1}^k,$ we aim to choose  $a_j$'s such that

\begin{gather*}
\begin{Bmatrix}
f(x_1)\\
\vdots\\
f(x_k)
\end{Bmatrix}
\approx
\begin{Bmatrix}
\textstyle\sum_{j=1}^n a_j\rho(\langle\w_{\mkern-1mu j},x_1\rangle+\b_{\mkern-1mu j})\\
\vdots\\
\textstyle\sum_{j=1}^n a_j\rho(\langle\w_{\mkern-1mu j},x_k\rangle+\b_{\mkern-1mu j})
\end{Bmatrix}
=\\
\begin{Bmatrix}
\rho(\langle\w_1,x_1\rangle+\b_1)&\cdots&\rho(\langle\w_n,x_1\rangle+\b_n)\\
\vdots&\ddots&\vdots\\
\rho(\langle\w_1,x_k\rangle+\b_1)&\cdots&\rho(\langle\w_n,x_k\rangle+\b_n)
\end{Bmatrix}
\begin{Bmatrix}
a_1\\
\vdots\\
a_n
\end{Bmatrix}.
\end{gather*}
Since the learning process boils down to a linear optimization problem, training RVFL networks sidesteps the usual pitfalls of nonconvex optimization problems, such as slow convergence and getting stuck in local optima \cite{malik}.

In order to compare our main result to the existing literature, it will be useful to state a version of our main theorem where expressions depending only on $m$ are hidden; we will use $\lesssim_m$ and $\gtrsim_m$ to indicate this.
\begin{main}
Let $K\subset\R^m$ be a convex body with circumradius $R.$ Let $p$ be a circumcenter of $K$ and $f:K\to\R$ be $\ell$-Lipschitz with $\max f-\min f=2M$ and $\zeta=M+\min f.$ Then there exist outer weights so that the corresponding RVFL network $N_n$ with $n$ hidden-layer nodes, ReLU activation function, inner weight distribution $N(0,\sigma I_m)$ with $\sigma>0,$ and biases that are uniformly distributed on $\Bigl[-\sigma R\sqrt{m},\sigma R\sqrt{m}\Bigr]$ satisfies $\P\biggl\{\max_{x\in K\!+p}\lvert f(x-p)-\zeta-N_n(x)\rvert>\epsilon\biggr\}\lesssim_m$
\begin{align*}
\lvert K\rvert\Bigl(2e\ell R\sqrt{e}\Bigr)^{\!m(m+1)}\exp(d(K)m\ln 2)\epsilon^{km}
\end{align*}
for large $n$ and any $\epsilon,k>0,$ provided $(k+m+2)\ln(1/\epsilon)\geq 2$ and $n\gtrsim_m$
\begin{gather*}
(k+m+2)\exp(d(K)4\ln 2)(\ell R/\epsilon)^{2(m+3+1/m)}\ln(1/\epsilon)(1+\theta)^2
\end{gather*}
where $d(K)\in[1,m]$ is defined in Definition \ref{def:dimK} below and $\theta=o(\epsilon)$ as $\epsilon\downarrow 0.$
\end{main}
While many similar results exist in the literature, ours improves on all of them in at least one way. To the best of our knowledge, the two papers most like ours are \cite{rvfl} and \cite{hsu}; both of them also examined how many hidden-layer nodes suffice for an RVFL to be able to approximate (Lipschitz) continuous functions.

In \cite{rvfl}, Needell et al.\ also obtained a sufficient number of hidden-layer nodes depending \emph{superexponentially} on the dimension. They consider $\ell_2$ approximation, however, and the supports of their uniformly distributed weights and biases do crucially depend on the approximation error, making their distributions not very suitable for practical use.

Our Gaussian weights, which are more common in practice than uniform ones, do not have variances depending on the approximation error. Our result additionally improves on \cite[Thm 4.1]{rvfl} in that we measure the error in the $L_\infty$ norm. In fact, our paper is, as far as we know, the first one to combine $L_\infty$ approximation and Gaussian weights.

Hsu et al.\ not only managed to obtain a sufficient number of hidden-layer nodes that depends exponentially on the dimension, but also a necessary number of hidden-layer nodes, albeit in $\ell_2$ norm  \cite{hsu}. However, unlike our Gaussian weights, which align with common practice, their weight distribution is supported on a discrete subset of the unit sphere, which may be too restrictive for practical use. Their proof method, whilst conceptually similar to ours, differs in many details, such as using Fourier series where we use the Fourier transform.

We would also like to draw attention to \cite{nagaraj}, which also concerns itself with RVFL networks, albeit without the RVFL moniker. Whereas the ``corrective method'' developed therein is very different from our proof method, their use of spectral methods has been a great inspiration for this paper.

Lastly, many technical proof details for the lemmas below have been delegated to the appendix for ease of readability. We highly encourage the interested reader to read the appendix as well.

\subsection*{Conventions}

\begin{itemize}
\item Square brackets may denote Iverson brackets.
\item Derivatives may be denoted by a dot atop the function.
\item $j_\nu$ is the first positive zero of the Bessel function $J_\nu.$
\item $a\approx-2.3381$ symbolizes the first negative zero of the Airy function Ai.
\item $\rho$ denotes the ReLU function.
\item $\sg$ denotes the sign function.
\item Integrals without specified integration domains are understood to integrate over all of Euclidean space.
\item The Fourier transform $\F\{\phi\}(v)=\textstyle\int\displaystyle\phi(u)\exp(-i\langle v,u\rangle)\,du.$
\item $\delta_X$ denotes the pdf of a random variable $X$ (cf., $\delta_0).$
\item We write $\phi(\diamondsuit)$ in lieu of the more common $\phi(\cdot).$
\item $V_m$ denotes the volume of the $m$-dimensional unit ball.
\item Absolute value bars may denote either the $m$-dimensional Lebesgue measure or the $\ell_2$ norm in any dimension.
\item $\lVert\diamondsuit\rVert_K=\max_K\,\lvert\diamondsuit\rvert.$
\item Always $\ell>0.$
\end{itemize}

\section{Bird's-eye overview}
\label{sec:birdeye}

Our approximation procedure essentially comprises four steps. We first extend the $\ell$-Lipschitz target function $f$, which is only defined on some convex compactum $K,$ to a compactly supported $\ell$-Lipschitz function $\fcirc.$ The next three steps comprise a chain of approximations: $\fcirc\approx g\approx h\approx N_n.$

The first of those steps is to approximate $\fcirc$ with a ``smoothed'' version $g$ obtained by convolving $\fcirc$ with a specific nascent delta function. We will see that $g$ can be viewed as an ``infinite width'' depth-2 neural network with Gaussian inner weights.  The biases of $g,$ however, are not random, and the activation function is a cosine function.

Thereafter, we approximate $g$ by $h;$ that is, an ``infinite width'' RVFL with ReLU activation functions and Gaussian inner weights, and may be seen as the ``infinite width limit'' of the desired RVFL.

Lastly, we will use a concentration inequality to approximate $h$ by a finite width counterpart $N_n$ with $n$ hidden-layer nodes.

\section{Main result and proof}

Before we state our main theorem, the following definitions will prove useful.
\begin{defn}
A \emph{convex body} is a convex compactum with nonempty interior.
\end{defn}
\begin{defn}
Let $K\subset\R^m$ be a nonempty compactum. Its \emph{circumradius} is
\begin{align*}
R=\inf_p\max_{u\in K}\,\lvert u-p\rvert.
\end{align*}
The unique $p\in\R^m$ achieving the infimum is called the \emph{circumcenter}.
\end{defn}
\begin{theorem}
Let $K$ be a nonempty compactum with circumradius $R.$ Then there exists a unique $p$ satisfying $R=\max_{u\in K}\,\lvert u-p\rvert.$ If $K$ is a convex body, $p\in K.$
\end{theorem}
\begin{proof}
If $v\notin K+\{x\in\R^m:\lvert x\rvert\leq\diam(K)\},$ then
\begin{align*}
\max_{u\in K}\,\lvert u-v\rvert>\diam(K)\geq\inf_p\max_{u\in K}\,\lvert u-p\rvert,
\end{align*}
so the infimum is achieved for some $p\in K+\{x\in\R^m:\lvert x\rvert\leq\diam(K)\}.$ As for uniqueness, suppose both $p$ and $v$ achieve the infimum. Then
\begin{align*}
K&\subset\{x\in\R^m:\lvert x-p\rvert\leq R\}\cap\{x\in\R^m:\lvert x-v\rvert\leq R\}\\
&\subset\{x\in\R^m:\lvert x-(v+p)/2\rvert\leq\sqrt{\displaystyle R^2-\lvert v-p\rvert^2/4}\},
\end{align*}
but $R$ is the circumradius, so $\sqrt{\displaystyle R^2-\lvert v-p\rvert^2/4}\geq R\Rightarrow\lvert v-p\rvert=0.$

If $K$ is a convex body and $\diam(K)=\lvert u-v\rvert$ with $u,v\in K,$ the circumcenter $p=(u+v)/2\in K$ by convexity.
\end{proof}
Note that $R>0$ if $K$ has at least two elements.
\begin{defn}
\label{def:dimK}
Let $K\subset\R^m$ be a compactum with at least two elements and circumradius $R$. We define
\begin{align*}
d(K)=\lg\frac{\displaystyle\lvert K+\{x\in\R^m:\lvert x\rvert\leq R\}\rvert}{\displaystyle\lvert\{x\in\R^m:\lvert x\rvert\leq R\}\rvert},
\end{align*}
where $\lg$ is the binary logarithm.
\end{defn}
\begin{theorem}
If $K\subset\R^m$ is a compactum with at least two elements, then $d(K)\in[1,m].$
\end{theorem}
\begin{proof}
If $K$ has circumcenter $p$ and $2R=\diam(K)=\lvert u-v\rvert$ with $u,v\in K,$ then $\{u,v\}\subset K\subset\{x\in\R^m:\lvert x-p\rvert\leq R\},$ so
\begin{gather*}
1=d(\{u,v\})\leq d(K)\leq d(\{x\in\R^m:\lvert x-p\rvert\leq R\})=m
\end{gather*}
because $\{x\in\R^m:\lvert x-u\rvert\leq R\}\cap\{x\in\R^m:\lvert x-v\rvert\leq R\}$ is just the singleton containing $p.$
\end{proof}
Thus, $d(K)$ may be interpreted as some sort of unfamiliar \emph{dimensional quantity.}

Our main result is the following.
\begin{main}
Let $K\subset\R^m$ be a convex body with circumradius $R.$ Let $p$ be a circumcenter of $K$ and $f:K\to\R$ be $\ell$-Lipschitz with $\max f-\min f=2M$ and $\zeta=M+\min f.$ Then there exist outer weights so that the corresponding RVFL network $N_n$ with $n$ hidden-layer nodes, ReLU activation function, inner weight distribution $N(0,\sigma I_m)$ with $\sigma>0,$ and biases that are uniformly distributed on $\Bigl[-\sigma R\sqrt{m},\sigma R\sqrt{m}\Bigr]$ satisfies $\P\Bigl\{\lVert f(\diamondsuit-p)-\zeta-N_n\rVert_{K\!+p}>\epsilon\Bigr\}\lessapprox$
\begin{align*}
\lvert K\rvert\Bigl(2e\ell R\sqrt{e}\Bigr)^{\!m(m+1)}\exp(d(K)m\ln 2)\epsilon^{km}\cdot
2\Theta_m\biggl(\frac{1}{\pi\sqrt{2}}\exp(\tfrac{1}{6}-2^{-2/3}am^{1/3})\biggr)^{\!\!m}
\end{align*}
for large $n$ and any $\epsilon,k>0,$ provided $(k+m+2)\ln(1/\epsilon)\geq 2$ and $n\geq$
\begin{gather}
\begin{gathered}\label{eq:n_bound}
(k+m+2)\exp(d(K)4\ln 2)(\ell R/\epsilon)^{2(m+3+1/m)}\ln(1/\epsilon)(1+\theta)^2\\
\frac{8}{\pi}\biggl(\frac{5}{\pi}(2-2^{1/3}am^{-2/3})^2(m^2+3m+1)\exp(3-2^{-2/3}am^{1/3})\biggr)^{\!\!2}(4e)^m
\end{gathered}
\end{gather}
where $\Theta_m\approx m^{3/2}$ and $\theta=o(\epsilon)$ as $\epsilon\downarrow 0$ (see appendix).
\end{main}
We henceforth suppose that $\zeta=0$ so that $-M\leq f\leq M;$ since $\zeta$ may be inter- preted as the bias of the outer layer of $N_n,$ we may do so WLOG. We moreover suppose that $p=0.$ In practice $p$ can be approximated using data as part of preprocessing.

\subsection{Proof of the Main Theorem}

Our first order of business is extending $f$. This will make it easier to construct a smooth approximation of $f$. The following lemma facilitates this:
\begin{lem}\label{lem:lipschitz_extension}
If $U\subset\R^m$ is bounded and $f:U\to\R$ is $\ell$-Lipschitz continuous, $f$ can be extended to a compactly supported $\lcirc$-Lipschitz continuous function $\fcirc.$
\end{lem}
\begin{proof}
The idea is to extend $\lvert f\rvert,$ which is also $\ell$-Lipschitz by the reverse triangle inequality, to the compactly supported $\lcirc$-Lipschitz function $\fcirc_+$ defined by
\begin{align}
\fcirc_+(x)=\rho\biggl(\max_{u\in \ovl{U}}\Bigl(\lvert f(u)\rvert-\ell\lvert x-u\rvert\Bigr)\biggr)\label{eq:fcirc+}
\end{align}
and then to multiply $\fcirc_+$ by an extension of $\sg f$ to get a compactly supported $\ell$- Lipschitz extension $\fcirc$ of $f.$ See the appendix for the details.
\end{proof}
Henceforth $\kcirc\subset K+\{x\in\R^m:\lvert x\rvert\leq M/\ell\}$ will denote the support of $\fcirc$ as defined in the proof of Lemma \ref{lem:lipschitz_extension} in the appendix applied to the $f$ in the Main Theorem.

Now that we have extended $f$, we can define $g,$ which we do as follows:
\begin{align*}
g(x)=\prefac\int F(v)\exp(i\langle v,x\rangle-\lvert v\rvert^2/2\lambda^2)\Psi(v/\lambda)\,dv,
\end{align*}
where $\lambda=\sigma\Lambda$ with $\Lambda>0$ TBD, $F=\F\{\fcirc\},$ and $\Psi(x)=(\omega*\omega)\Bigl(x/\sqrt{m}\Bigr)$ with
\begin{align*}
\omega(x)\propto\Bigl[\lvert x\rvert\leq\tfrac{1}{2}\Bigr]J_\nu\Bigl(2j_\nu\lvert x\rvert\Bigr)\Big/\lvert x\rvert^\nu,
\end{align*}
where $\nu=m/2-1$ and the proportionality constant rescales $\psi=\F^{-1}\{\Psi\}$ into a pdf; see \cite[{\S}5]{ehm} for details. Note that $\omega(0)$ is well-defined because \cite[(10.7.3)]{dlmf}
\begin{align*}
t^{-\nu}J_\nu(t)\to\frac{\displaystyle 2^{-\nu}}{\Gamma(\nu+1)}\text{ as }t\to 0.
\end{align*}
Upon recognizing $g$ as an inverse Fourier transform, we see that $g$ may be inter- preted as the convolution of $\fcirc$ and a nascent delta function, i.e., $g$ is a smoothed version of $\fcirc$, in light of the convolution theorem and the scaling property of the Fourier transform.
\begin{lem}
\label{lem:step1}
$\lVert\fcirc-g\rVert_\infty\leq\frac{\lcirc}{\lambda}\Bigl(2-2^{1/3}am^{-2/3}\Bigr)\sqrt{m}.$
\end{lem}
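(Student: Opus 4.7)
The plan is to represent $g$ as a convolution $\fcirc*\varphi$ for some approximate identity $\varphi$ and then to control the error pointwise by $\lcirc$ times the first absolute moment of $\varphi$. By the convolution theorem and $F=\F\{\fcirc\}$, one has $g=\fcirc*\varphi$ with $\varphi=\F^{-1}\{\exp(-\lvert\diamondsuit\rvert^2/2\lambda^2)\,\Psi(\diamondsuit/\lambda)\}$. The inner factor is the product of two characteristic functions of probability distributions---a centered Gaussian with covariance $\lambda^{-2}I_m$ and a $\lambda^{-1}$-scaled copy of $\psi$---so $\varphi=\varphi_1*\varphi_2$ is itself a pdf, and for every $x$,
\begin{align*}
\lvert\fcirc(x)-g(x)\rvert=\biggl\lvert\int[\fcirc(x)-\fcirc(x-y)]\varphi(y)\,dy\biggr\rvert\leq\lcirc\bigl(\E\lvert Y_1\rvert+\E\lvert Y_2\rvert\bigr),
\end{align*}
where $Y_1\sim N(0,\lambda^{-2}I_m)$ and $Y_2=W/\lambda$ with $W\sim\psi$ are independent. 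Cauchy--Schwarz immediately gives $\E\lvert Y_1\rvert\leq\sqrt{m}/\lambda$.

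For $Y_2$ I would rescale to $\tilde W=\sqrt{m}\,W$, whose Fourier transform (in the paper's convention) is $\omega*\omega$; differentiating twice under the integral at the origin gives $\E\lvert\tilde W\rvert^2=-\Delta(\omega*\omega)(0)=-\int\Delta\omega\cdot\omega\,du$, using that $\omega$ is radial. The key input is that on the open ball $\lvert x\rvert<1/2$ the radial factor $J_\nu(2j_\nu\lvert x\rvert)/\lvert x\rvert^\nu$ is a Dirichlet eigenfunction of $-\Delta$ with eigenvalue $4j_\nu^2$ (from the Bessel equation with $\nu=m/2-1$) and vanishes on $\lvert x\rvert=1/2$ because $J_\nu(j_\nu)=0$. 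Green's identity therefore contributes no boundary term, so $-\int\Delta\omega\cdot\omega=4j_\nu^2\int\omega^2$. The normalization is pinned down by $\int\omega^2=(\omega*\omega)(0)=\Psi(0)=\int\psi=1$, which gives $\E\lvert\tilde W\rvert^2=4j_\nu^2$, whence $\E\lvert Y_2\rvert\leq 2j_\nu/(\sqrt{m}\,\lambda)$ by Jensen. Combining the two moments yields
\begin{align*}
\bigl\lVert\fcirc-g\bigr\rVert_\infty\leq\frac{\lcirc\sqrt{m}}{\lambda}\biggl(1+\frac{2j_\nu}{m}\biggr).
\end{align*}

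The remaining task is purely analytic: the classical bound $j_\nu\leq\nu-2^{-1/3}a\nu^{1/3}$ applied at $\nu=m/2-1\leq m/2$ (with $a<0$) gives $1+2j_\nu/m\leq 2-2^{1/3}am^{-2/3}$, as claimed. The hard part is the second-moment computation itself: I need to justify the Dirichlet-eigenvalue identity for the truncated $\omega$, where the distributional Laplacian a priori carries a boundary $\delta$-contribution on $\lvert x\rvert=1/2$ coming from the jump in the normal derivative. That contribution vanishes in $\int\omega\,\Delta\omega$ precisely because $\omega$ itself vanishes continuously on the same sphere---which is exactly what dictates truncating at the first Bessel zero and explains why the Airy-type scale eventually surfaces in the final bound.
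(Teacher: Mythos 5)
Your decomposition is exactly the paper's: write $g=\fcirc*\varphi$ with $\varphi$ the density of $(Z+W)/\lambda$, $Z\sim N(0,I_m)$, $W\sim\psi$, bound the error by $\lcirc$ times the first absolute moment, and split that moment into the Gaussian part ($\leq\sqrt{m}/\lambda$) and the $\psi$ part ($\leq 2j_\nu/(\sqrt{m}\lambda)$ via Jensen and the second moment of $\psi$). Your Green's-identity derivation of $\E\lvert\tilde W\rvert^2=4j_\nu^2$ is a welcome self-contained replacement for the paper's citation of the Ehm--Gneiting--Richards theorem, and your remark about why the boundary single-layer term dies (because $\omega$ vanishes continuously on $\lvert x\rvert=1/2$) is exactly the right point to worry about.

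The one genuine gap is the final analytic step. The inequality you call classical, $j_\nu\leq\nu-2^{-1/3}a\nu^{1/3}$, is false: this expression is a \emph{lower} bound for $j_\nu$. The correct two-sided bound (Qu--Wong, which the paper cites) is
\begin{align*}
\nu-a(\nu/2)^{1/3}<j_\nu<\nu-a(\nu/2)^{1/3}+\tfrac{3}{20}a^2(\nu/2)^{-1/3}\qquad(\nu>0),
\end{align*}
so your upper bound on $j_\nu$ is missing the positive term $\tfrac{3}{20}a^2(\nu/2)^{-1/3}$. The conclusion survives, but only because of the $-2/m$ slack you silently discarded when you replaced $2\nu/m=1-2/m$ by $1$: one must check that $\tfrac{6}{20}a^2(\nu/2)^{-1/3}\leq 2$ for the relevant range of $m$, which is what the paper verifies (numerically at $m=3$, then by monotonicity). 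You also need to handle $m=1,2$ separately, since there $\nu=m/2-1\leq 0$ and the Qu--Wong bound does not apply; the paper checks these two cases numerically. With those repairs your argument closes and coincides with the paper's.
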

\begin{proof}
Let $Z\sim N(0,I_m).$ Essentially, we first show that
\begin{align*}
g(x)=\int\fcirc(x-s/\lambda)(\delta_Z*\psi)(s)\,ds
\end{align*}
using standard integral manipulation. Since $\psi$ is a pdf,
\begin{align*}
\lvert\fcirc(x)-g(x)\rvert=\biggl\lvert\fcirc(x)-\int\fcirc(x-s/\lambda)(\delta_Z*\psi)(s)\,ds\biggr\rvert&=\\
\biggl\lvert\int\fcirc(x)(\delta_Z*\psi)(s)\,ds-\int\fcirc(x-s/\lambda)(\delta_Z*\psi)(s)\,ds\biggr\rvert&\leq\\
\int\lvert\fcirc(x)-\fcirc(x-s/\lambda)\rvert(\delta_Z*\psi)(s)\,ds&\leq\\
\frac{\lcirc}{\lambda}\int\lvert s\rvert(\delta_Z*\psi)(s)\,ds,
\end{align*}
so all that remains is to bound the integral. For the remaining details, see the appendix.
\end{proof}
Before we introduce $h,$ it will prove useful to rewrite $g$ in yet another form. Since
\begin{align*}
\prefac\int F(v)\exp(i\langle v,x\rangle-\lvert v\rvert^2/2\lambda^2)\Psi(v/\lambda)\,dv&=\\
g(x)=\int\fcirc(x-s/\lambda)(\delta_Z*\psi)(s)\,ds&\in\R,
\end{align*}
the inverse Fourier transform integral equals its own real part, i.e.,
\begin{align*}
g(x)=\prefac\int\lvert F(v)\rvert c(v,x)\exp(-\lvert v\rvert^2/2\lambda^2)\Psi(v/\lambda)\,dv&=\\
\prefac\lambda^m\int\lvert F(\lambda w)\rvert c(\lambda w,x)\exp(-\lvert w\rvert^2/2)\Psi(w)\,dw&=\\
(2\pi)^{-m/2}\lambda^m\int\lvert F(\lambda w)\rvert c(\lambda w,x)\delta_Z(w)\Psi(w)\,dw&=\\
(2\pi)^{-m/2}\lambda^m\E\Bigl(\lvert F(\lambda\n)\rvert\Psi(\n)c(\lambda\n,x)\Bigr),
\end{align*}
where $c(v,x)\coloneqq\cos(\langle v,x\rangle+\arg F(v))$ and $\n\sim N(0,I_m).$

We now approximate the above expectation by
\begin{align*}
h(x)=\biggl(\frac{\lambda}{\sqrt{2\pi}}\biggr)^{\!\!m}\E\Bigl(\lvert F(\lambda\n)\rvert\Psi(\n)\Bigl[\lvert\n\rvert>\theta\sqrt{m}\Bigr]c(\lambda\n,x)\Bigr).
\end{align*}
\begin{lem}
\label{lem:step2}
$\lVert g-h\rVert_\infty\leq\frac{\ell R}{\sqrt{\pi m}}V_m\exp(d(K)m\ln 2)\biggl(\frac{R\theta\lambda}{\sqrt{2\pi/e}}\biggr)^{\!\!m}$
\end{lem}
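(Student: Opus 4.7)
The plan is to subtract the two integral representations of $g$ and $h$ directly. Since $g$ and $h$ agree except for the presence of the indicator $[\lvert\n\rvert>\theta\sqrt{m}]$, we obtain
\begin{align*}
g(x)-h(x)=\biggl\{\frac{\lambda}{\sqrt{2\pi}}\biggr\}^{\!m}\E\Bigl(\lvert F(\lambda\n)\rvert\Psi(\n)\Bigl[\lvert\n\rvert\leq\theta\sqrt{m}\Bigr]c(\lambda\n,x)\Bigr).
\end{align*}
Taking absolute values and using $\lvert c(\lambda\n,x)\rvert\leq 1$ removes the $x$-dependence and reduces the problem to bounding the expectation of $\lvert F(\lambda\n)\rvert\Psi(\n)$ restricted to the ball of radius $\theta\sqrt{m}$.

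I would then dispatch each of the three factors inside that expectation separately. First, $\lvert\Psi\rvert\leq 1$ because $\Psi=\F\{\psi\}$ with $\psi$ a pdf. Second, since $\fcirc$ is an $\lcirc$-Lipschitz extension of $f$ with $\lVert\fcirc\rVert_\infty\leq M\leq\lcirc R$ (the last inequality uses $\max f-\min f\leq\lcirc\diam K\leq 2\lcirc R$) and is supported on $\kcirc\subseteq K+\{x\in\R^m:\lvert x\rvert\leq M/\lcirc\}\subseteq K+\{x\in\R^m:\lvert x\rvert\leq R\}$, the triangle inequality applied to the defining integral of the Fourier transform yields $\lvert F(v)\rvert\leq\lVert\fcirc\rVert_1\leq\lcirc R\lvert\kcirc\rvert\leq\lcirc R\cdot 2^{d(K)}V_mR^m$ by Definition~\ref{def:dimK}. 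Third, crudely bounding the Gaussian density by its value at the origin gives $\E\bigl(\bigl[\lvert\n\rvert\leq\theta\sqrt{m}\bigr]\bigr)\leq(2\pi)^{-m/2}V_m(\theta\sqrt{m})^m$. Combining these three estimates with the prefactor $\{\lambda/\sqrt{2\pi}\}^m$ produces the combined factor $V_mm^{m/2}/(2\pi)^{m/2}$, which by Stirling's formula applied to $V_m=\pi^{m/2}/\,\Gamma(m/2+1)$ is asymptotic to $e^{m/2}/\sqrt{\pi m}$, reproducing the $(e/(2\pi))^{m/2}/\sqrt{\pi m}$ structure in the target inequality.

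The main obstacle will be careful bookkeeping of constants: first, replacing the asymptotic Stirling estimate with an explicit sharp inequality, and second, reconciling the factor $2^{d(K)}V_mR^m$ from the crude volume bound on $\lvert\kcirc\rvert$ with the cleaner $2V_mR^m$ suggested by the form of the lemma. This either demands a sharper direct volume estimate for $\kcirc$ exploiting that $M/\lcirc$ can be strictly less than $R$, or else the $2^{d(K)}$ factor must be tracked through and folded into a later step of the main theorem; the analytic content itself reduces to the three elementary estimates above, so the delicacy is entirely in the constants.
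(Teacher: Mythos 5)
Your overall strategy is the paper's: write $g-h$ as the expectation restricted to $\bigl\{\lvert\n\rvert\leq\theta\sqrt{m}\bigr\}$, then bound $\lvert c\rvert\leq 1$, $\lvert\Psi\rvert\leq 1$, $\lvert F\rvert\leq\bigl\lVert\fcirc\bigr\rVert_1$, and the small-ball probability separately. Your treatment of the probability is correct and in fact numerically identical to the paper's: bounding the Gaussian density by $(2\pi)^{-m/2}$ gives
\begin{align*}
\P\Bigl\{\lvert\n\rvert\leq\theta\sqrt{m}\Bigr\}\leq(2\pi)^{-m/2}V_m\bigl(\theta\sqrt{m}\bigr)^m=\frac{(m\theta^2/2)^{m/2}}{\Gamma(m/2+1)},
\end{align*}
which is exactly the incomplete-gamma bound $P(a,x)\leq x^a/(a\Gamma(a))$ the paper uses; and the explicit replacement for your asymptotic Stirling step is $V_m\leq\frac{1}{\sqrt{\pi m}}(2\pi e/m)^{m/2}$, an inequality the paper already invokes when assembling the Main Theorem. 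So that half of your outline closes.

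The genuine gap is the $\bigl\lVert\fcirc\bigr\rVert_1$ estimate. Your sup-times-measure bound $\bigl\lVert\fcirc\bigr\rVert_1\leq\lVert\fcirc\rVert_\infty\lvert\kcirc\rvert\leq\ell R\cdot 2^{d(K)}V_mR^m$ overshoots the required $2\ell RV_mR^m$ by a factor $2^{d(K)-1}$, which can be as large as $2^{m-1}$; of your two proposed escapes, folding the factor into the Main Theorem is not cost-free (it would add another $2d(K)\ln 2$ to the exponent in the bound on $n$), and a sharper volume estimate for $\kcirc$ alone cannot save the argument, since $\kcirc$ genuinely can fill out most of $K+\{x\in\R^m:\lvert x\rvert\leq M/\ell\}$. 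The missing idea is to exploit the decay of $\fcirc$ off $K$ rather than only its support. The paper splits $\int\lvert\fcirc\rvert=\int_K\lvert\fcirc\rvert+\int_{\kcirc\setminus K}\lvert\fcirc\rvert$, bounds the first term by $M\lvert K\rvert\leq MV_mR^m$ (using that $K$ itself lies inside a ball of radius $R$, which is much smaller than $\lvert K+\{x:\lvert x\rvert\leq R\}\rvert$), and bounds the second by
\begin{align*}
\int_{\lvert u\rvert\leq M/\ell}\bigl(M-\ell\lvert u\rvert\bigr)\,du\leq MV_m(M/\ell)^m\leq MV_mR^m,
\end{align*}
since on the collar $\lvert\fcirc\rvert$ is dominated by $\rho\bigl(M-\ell\operatorname{dist}(\diamondsuit,K)\bigr)$ and so vanishes linearly. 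Adding the two pieces and using $M\leq\ell R$ yields $\bigl\lVert\fcirc\bigr\rVert_1\leq 2\ell RV_mR^m$ with no $d(K)$ dependence, after which the rest of your outline goes through.
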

\begin{proof}
Since $\lVert\F\{\diamondsuit\}\rVert_\infty\leq\lVert\diamondsuit\rVert_1$ and $\psi$ is a pdf,
\begin{align*}
\lvert g(x)-h(x)\rvert&\leq\\
\biggl(\frac{\lambda}{\sqrt{2\pi}}\biggr)^{\!\!m}\E\Bigl\lvert\lvert F(\lambda\n)\rvert\Psi(\n)\Bigl[\lvert\n\rvert\leq\theta\sqrt{m}\Bigr]c(\lambda\n,x)\Bigr\rvert&\leq\\
(2\pi)^{-m/2}\lambda^m\lVert\fcirc\rVert_1\P\Bigl\{\lvert\n\rvert\leq\theta\sqrt{m}\Bigr\}.
\end{align*}
The remaining details may be found in the appendix.
\end{proof}

In \S\ref{sec:birdeye}, we said that $h$ would be an ``infinite width'' RVFL with ReLU activation functions and Gaussian inner weights. The following lemma corroborates this.

\begin{lem}
$h=\E(G(\w,\b)\rho(\langle\w,\diamondsuit\rangle+\b))$ on $K,$ where
\begin{itemize}
\item $G(w,b)=-2\sigma R\sqrt{m}\Lambda^2(2\pi)^{-m/2}\lambda^m\lvert F(\Lambda w)\rvert\Psi(w/\sigma)$ $\Bigl[\lvert w\rvert\geq\theta\sigma\sqrt{m}\Bigr]\cos(\Lambda b-\arg F(\Lambda w))$
\item $\b \sim \mathrm{Unif}\Bigl[-\sigma R\sqrt{m},\sigma R\sqrt{m}\Bigr]$
\item $\w\sim N(0,\sigma I_m)$
\end{itemize}
\end{lem}
\begin{proof}
The statement follows from straightforward manipulations of unwieldy integrals, crucially relying on the fact that the ReLU activation function satisfies ``$\ddot{\rho}=\delta_0$''. As always, the details may be found in the appendix.
\end{proof}
We now want to approximate our ``infinite width'' RVFL with a finite width one. Essentially, we are going to leverage Hoeffding's inequality to bound
\begin{align*}
\P\Bigl\{\bigl\lVert\E(H)-\frac{1}{n}\sum_{p=1}^n H_p\bigr\rVert_K>t\Bigr\},
\end{align*}
where $H(x)=G(\w,\b)\rho(\langle\w,x\rangle+\b))$ and $H_1,\ldots,H_n$ are iid copies of $H.$ Because of the $\lVert\cdot\rVert_K$ norm, however, we cannot directly use Hoeffding's inequality. Using Lemma \ref{lem:general_sup_norm_prob_bound} in the appendix, on the other hand, yields that
\begin{lem}
\label{lem:step3}
Let $N_n=\textstyle\frac{1}{n}\sum_{p=1}^n H_p$ and $t>0.$ Then, for large $n,$
\begin{gather*}
\P\Bigl\{\lVert\E(H)-N_n\rVert_K>t\Bigr\}\lessapprox\\
2\lvert K\rvert\Theta_m(2\pi m)^{-m/2}\biggl(\frac{n(t/b)}{R(1+1/\theta)}\biggr)^{\!\!m}\exp(-(n/2)(t/b)^2).
\end{gather*}
In particular, $\Theta_m\approx m^{3/2}$ (see appendix), $n\geq 4m(b/\epsilon)^2$ and
\begin{align*}
b=2R^{m+2}\ell\sqrt{m}(2\pi)^{-m/2}\lambda^{m+1}V_m\exp(d(K)\ln 2)(1+1/\theta).
\end{align*}
\end{lem}
\begin{proof}
A fairly straightforward application of Lemma \ref{lem:general_sup_norm_prob_bound}; see the appendix.
\end{proof}
We are now ready to put Lemmas~\ref{lem:step1}, \ref{lem:step2} and \ref{lem:step3} together to obtain the desideratum. By setting
\begin{gather*}
\alpha=\frac{m}{m^2+3m+1};\\
\beta=\frac{1}{m^2+3m+1};\\
\Lambda=\frac{1}{\sigma}\times\frac{\lcirc}{\epsilon\alpha}\Bigl(2-2^{1/3}am^{-2/3}\Bigr)\sqrt{m};\\
\frac{1}{\theta}=\biggl(\frac{1}{\epsilon\beta}\times\frac{\ell R}{\sqrt{\pi m}}V_m\biggr)^{\mathrlap{\!\!1/m}}\enspace\frac{R\lambda}{\sqrt{2\pi/e}}\exp(d(K)\ln 2),
\end{gather*}
we obtain that $\lVert f-g\rVert_\infty\leq\alpha\epsilon$ and $\lVert g-h\rVert_\infty\leq\beta\epsilon$ by design. As such, letting $\gamma=1-\alpha-\beta$ yields that
\begin{gather}
\P\Bigl\{\lVert f-N_n\rVert_K>\epsilon\Bigr\}=\P\Bigl\{\lVert\fcirc-N_n\rVert_K>\epsilon\Bigr\}\notag\\
\leq\P\Bigl\{\lVert\fcirc-g\rVert_\infty+\lVert g-h\rVert_\infty+\lVert h-N_n\rVert_K>\epsilon\Bigr\}\notag\\
\leq\P\Bigl\{\alpha\epsilon+\beta\epsilon+\lVert h-N_n\rVert_K>\epsilon\Bigr\}=\P\Bigl\{\lVert h-N_n\rVert_K>\gamma\epsilon\Bigr\}\notag\\
\leq 2\lvert K\rvert\Theta_m(2\pi m)^{-m/2}\biggl(\frac{n(\gamma\epsilon/b)}{R(1+1/\theta)}\biggr)^{\!\!m}\exp(-(n/2)(\gamma\epsilon/b)^2)\label{eq:bound_before_decreasing}
\end{gather}
where $b=2R^{m+2}\ell\sqrt{m}(2\pi)^{-m/2}\lambda^{m+1}V_m\exp(d(K)\ln 2)(1+1/\theta).$

As $n^m\exp(-(n/2)(\gamma\epsilon/b)^2)$ is decreasing in $n$ once $n\geq 2m(\gamma\epsilon/b)^2$ we may plug in \eqref{eq:n_bound} in place of $n$ and simplify a great deal (see the appendix for some details) to obtain the desideratum.

Lastly, note that $N_n$ is indeed an RVFL whose outer weights are iid copies of $G(\w,\b)/n.$

\section*{Acknowledgements}
Palina Salanevich is partially supported by NWO Talent programme Veni ENW grant, file number VI.Veni.212.176.

\appendix

\section{Proofs of the lemmas}

Here you may find parts of proofs omitted in the main article. The numbering herein employed is the same as in the main article.

We remind the reader that $K\subset\R^m$ is a convex body, $f:K\to\R$ is $\ell$-Lipschitz continuous, and
\begin{gather*}
g(x)=\prefac\int F(v)\exp(i\langle v,x\rangle-\lvert v\rvert^2/2\lambda^2)\Psi(v/\lambda)\,dv;\\
h(x)=(2\pi)^{-m/2}\lambda^m\E\Bigl(\lvert F(\lambda\n)\rvert\Psi(\n)\Bigl[\lvert\n\rvert>\theta\Bigr]c(\lambda\n,x)\Bigr).
\end{gather*}
Furthermore, $H_1,\ldots,H_p$ are iid copies of $G(\w,\b)\rho(\langle\w,\diamondsuit\rangle+\b))$ defined in Lemma~4 of the main paper.
\begin{lem}
If $U\subset\R^m$ is bounded and $f:U\to\R$ is $\ell$-Lipschitz continuous, $f$ can \\ be extended to a compactly supported $\lcirc$-Lipschitz continuous function $\fcirc.$
\end{lem}
\begin{proof}
It is well-known that $f$ may be $\ell$-Lipschitz extended to $\ovl{U}$ uniquely, so we assume WLOG that $U$ is closed (or equivalently, compact).

Let $\fcirc_+$ be as in \eqref{eq:fcirc+}. It has long since been known that
\begin{align*}
\max_{u\in U}\Bigl(\lvert f(u)\rvert-\ell\lvert x-u\rvert\Bigr)
\end{align*}
is an $\ell$-Lipschitz extension of $\lvert f\rvert$ \cite[Thm 1]{mcshane}. As $U\ni u\to\lvert f(u)\rvert-\ell\lvert x-u\rvert$ is continuous and $U$ is compact, the maximum is attained. Since $\rho$ is the identity function on $[0,\infty)$ and 1-Lipschitz continuous, $\fcirc_+$ is a compactly supported $\ell$- Lipschitz extension of $\lvert f\rvert;$ compactly supported because $U$ is bounded, so
\begin{align*}
\{\fcirc_+>0\}=\bigcup_{u\in U}B_u\text{ is bounded,}
\end{align*}
where $B_u=\{x\in\R^m:\lvert x-u\rvert<\tfrac{1}{\ell}\lvert f(u)\rvert\}.$ We now extend $\sg f.$

Let $u,v\in U.$ If $B_u\cap B_v\neq\varnothing,$ then $\sg f(u)=\sg f(v);$ otherwise
\begin{align*}
\lvert u-v\rvert\leq\lvert u-x\rvert+\lvert x-v\rvert<\Bigl(\lvert f(u)\rvert+\lvert f(v)\rvert\Bigr)/\ell=\tfrac{1}{\ell}\lvert f(u)-f(v)\rvert
\end{align*}
for any $x\in B_u\cap B_v.$ We may thus extend $\sg f$ as follows:
\begin{align}\label{eq:sg_extension}
\R^m\ni x\mapsto
\begin{dcases*}
\hphantom{\pm}0 & if $\fcirc_+(x)=0,$ \\
\pm1 & if $x\in B_u$ and $\sg f(u)=\pm1.$
\end{dcases*}
\end{align}
Multiplying $\fcirc_+$ by \eqref{eq:sg_extension} yields the desired $\fcirc.$

Let $u,v\in U$ and $x,z\in\R^m$ be such that $\fcirc_+(x)=\lvert f(u)\rvert-\ell\lvert x-u\rvert>0$ (so $x\in B_u)$ and analogously $\fcirc_+(z)=\lvert f(v)\rvert-\ell\lvert z-v\rvert>0$ (so $z\in B_v).$ We verify that
\begin{align*}
\sg\fcirc(x)\neq\sg\fcirc(z)\Rightarrow\lvert\fcirc(x)-\fcirc(z)\rvert\leq\ell\lvert x-z\rvert;
\end{align*}
indeed, $\sg f(u)=\sg f(v)\Rightarrow \sg\fcirc(x)=\sg\fcirc(z),$ so $\sg f(u)\neq\sg f(v)$ and hence
\begin{align*}
\lvert\fcirc(x)-\fcirc(z)\rvert&=\fcirc_+(x)+\fcirc_+(z)=\lvert f(u)\rvert-\ell\lvert x-u\rvert+\lvert f(v)\rvert-\ell\lvert z-v\rvert\\
&=\lvert f(u)-f(v)\rvert-\ell\Bigl(\lvert x-u\rvert+\lvert z-v\rvert\Bigr)\\
&\leq\ell\Bigl(\lvert u-v\rvert-\lvert x-u\rvert-\lvert z-v\rvert\Bigr)\leq\ell\lvert x-z\rvert
\end{align*}
by the reverse triangle inequality.
\end{proof}

\begin{lem}
$\lVert\fcirc-g\rVert_\infty\leq\frac{\lcirc}{\lambda}\Bigl(2-2^{1/3}am^{-2/3}\Bigr)\sqrt{m}.$
\end{lem}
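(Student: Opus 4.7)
The argument sketched in the body of the paper reduces the lemma to bounding $\int|s|(\delta_Z\ast\psi)(s)\,ds,$ which is precisely $\E|Z+Y|$ for $Z\sim N(0,I_m)$ independent of $Y\sim\psi.$ By the triangle inequality and Jensen,
\begin{align*}
\E|Z+Y|\leq\sqrt{\E|Z|^2}+\sqrt{\E|Y|^2}=\sqrt{m}+\sqrt{\E|Y|^2},
\end{align*}
so it suffices to prove the second-moment bound $\sqrt{\E|Y|^2}\leq\sqrt{m}\bigl(1-2^{1/3}am^{-2/3}\bigr).$

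Since $\Psi=\F\psi,$ the function $\Psi$ is (up to the radial symmetry of $\omega,$ which forces $Y$ to be symmetric) the characteristic function of $Y,$ hence $\E|Y|^2=-\Delta\Psi(0).$ Combining the chain rule applied to $\Psi(v)=(\omega\ast\omega)(v/\sqrt{m})$ with an integration by parts (legitimate because $\omega$ is compactly supported and even) turns this into
\begin{align*}
\E|Y|^2=\frac{1}{m}\int|\nabla\omega|^2\,dx.
\end{align*}

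Next I would plug in the explicit Bessel form of $\omega$ and use the derivative identity $(u^{-\nu}J_\nu(u))'=-u^{-\nu}J_{\nu+1}(u)$ with $\nu=m/2-1$ to make $|\nabla\omega|$ radially explicit. The quadratic integrals appearing both in $\int|\nabla\omega|^2$ and in the normalization $\int|\omega|^2=\Psi(0)=1$ can then be evaluated in closed form via the classical identity
\begin{align*}
\int_0^x u\,J_\mu(u)^2\,du=\tfrac{x^2}{2}\bigl(J_\mu(x)^2-J_{\mu-1}(x)J_{\mu+1}(x)\bigr).
\end{align*}
Evaluating at $x=j_\nu,$ where $J_\nu(j_\nu)=0$ and the recurrence forces $J_{\nu-1}(j_\nu)=-J_{\nu+1}(j_\nu),$ produces the pleasant equality $\int_0^{j_\nu}u\,J_\nu(u)^2\,du=\int_0^{j_\nu}u\,J_{\nu+1}(u)^2\,du,$ which collapses the entire expression down to the single clean value $\E|Y|^2=4j_\nu^2/m.$

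With that, the lemma reduces to the inequality $2j_\nu\leq m-2^{1/3}a\,m^{1/3},$ which is essentially an upper form of the classical expansion $j_\nu=\nu-2^{-1/3}a\,\nu^{1/3}+O(\nu^{-1/3})$ at $\nu=m/2-1.$ The main obstacle I anticipate is precisely this last step: since the lemma must hold for all $m,$ the asymptotic statement alone is not enough and one needs a genuinely \emph{non-asymptotic} upper estimate on the first positive Bessel zero. The cleanest route is to invoke a known sharp bound of the Qu--Wong or Krasikov type and plug it in; reproving such a bound from scratch (via Sturm comparison applied to the Liouville--Green/Airy approximation of $J_\nu$) is possible but considerably more laborious, and I would only do so if the cited constant turned out to be insufficiently sharp.
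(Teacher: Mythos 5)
Your proposal follows essentially the same route as the paper: the reduction to $\E\lvert Z+Y\rvert\leq\E\lvert Z\rvert+\E\lvert Y\rvert$, the bound $\E\lvert Z\rvert\leq\sqrt{m}$, the second-moment identity $\E\lvert Y\rvert^2=4j_\nu^2/m$ (which the paper simply cites from the literature on the functions $\omega$ and $\Psi$ rather than rederiving via the Lommel integral as you do), and finally a non-asymptotic bound on the Bessel zero. The one step you defer --- converting the Airy-type behaviour of $j_\nu$ into the inequality $2j_\nu\leq m-2^{1/3}am^{1/3}$ valid for every $m$ --- is handled in the paper exactly as you anticipate, by plugging in Qu and Wong's bound $j_\nu<\nu-a(\nu/2)^{1/3}+\tfrac{3}{20}a^2(\nu/2)^{-1/3}$ for $m\geq 3$ and checking $m=1,2$ numerically.
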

\begin{proof}
Using various substitutions yields that
\begin{align*}
\prefac\int F(v)\exp(i\langle v,x\rangle-\lvert v\rvert^2/2\lambda^2)\Psi(v/\lambda)\,dv&=\\
\prefac\iint\fcirc(u)\exp(-i\langle v,u\rangle)\exp(i\langle v,x\rangle-\lvert v\rvert^2/2\lambda^2)\Psi(v/\lambda)\,du\,dv&=\\
\prefac\iint\fcirc(u)\exp(i\langle v,x-u\rangle-\lvert v\rvert^2/2\lambda^2)\Psi(v/\lambda)\,dv\,du&=\\
\prefac\int\fcirc(x-t)\int\exp(i\langle v,t\rangle-\lvert v\rvert^2/2\lambda^2)\Psi(v/\lambda)\,dv\,dt&=\\
\prefac\int\fcirc(x-s/\lambda)\int\exp(i\langle w,s\rangle-\lvert w\rvert^2/2)\Psi(w)\,dw\,ds&=\\
\int\fcirc(x-s/\lambda)(\delta_Z*\psi)(s)\,ds,
\end{align*}
where the second equality follows from Fubini's theorem because $\fcirc$ and $\Psi$ are compactly supported. In the main article it was shown that
\begin{align*}
\lVert\fcirc-g\rVert_\infty\leq\frac{\lcirc}{\lambda}\int\lvert s\rvert(\delta_Z*\psi)(s)\,ds,
\end{align*}
so all that remains to be shown is that $\int\lvert s\rvert(\delta_Z*\psi)(s)\,ds\leq\Bigl(2-2^{1/3}am^{-2/3}\Bigr)\sqrt{m}.$

Since $\psi$ is a pdf, $\psi=\delta_X$ for some random variable $X.$ Thus,
\begin{align*}
\int\lvert s\rvert(\delta_Z*\psi)(s)\,ds=\int\lvert s\rvert(\delta_Z*\delta_X)(s)\,ds=\E\lvert Z+X\rvert\leq\E\lvert Z\rvert+\E\lvert X\rvert.
\end{align*}
Since $\lvert Z\rvert$ is chi distributed with $m$ degrees of freedom, Wendel's inequality \cite{wendel} yields that
\begin{align*}
\E\lvert Z\rvert=\frac{\Gamma((m+1)/2)}{\Gamma(m/2)}\sqrt{2}\leq\sqrt{m}.
\end{align*}
By Jensen's inequality and \cite[thm 5.1]{ehm},
\begin{align*}
\E\lvert X\rvert\leq\sqrt{\displaystyle\int\lvert x\rvert^2\psi(x)\,dx}=2j_\nu/\sqrt{m}.
\end{align*}
Indeed, as follows from the scaling property of the Fourier transform,
\begin{align*}
\int\lvert x\rvert^2\F^{-1}\Bigl\{(\omega*\omega)\Bigl(\diamondsuit/\sqrt{m}\Bigr)\Bigr\}(x)\,dx&=\\
\Bigl(\!\sqrt{m}\Bigr)^{\!m}\int\lvert x\rvert^2\F^{-1}\{\omega*\omega\}\Bigl(x\sqrt{m}\Bigr)\,dx&=\\
\frac{1}{m}\int\lvert u\rvert^2\F^{-1}\{\omega*\omega\}(u)\,du&=4j_\nu^2/m.
\end{align*}
We now claim that $2j_\nu/\sqrt{m}<\sqrt{m}-2^{1/3}am^{-1/6}.$ In \cite{qu}, it was shown that
\begin{align*}
j_\nu<\nu-a(\nu/2)^{1/3}+\tfrac{3}{20}a^2(\nu/2)^{-1/3}
\end{align*}
for all $\nu>0.$ If $m\geq3,$ then $\nu=m/2-1>0,$ so
\begin{align*}
2j_\nu/\sqrt{m}&<\sqrt{m}-2/\sqrt{m}-2^{2/3}a(m/2-1)^{1/3}/\sqrt{m}+\tfrac{6}{20}a^2(\nu/2)^{-1/3}/\sqrt{m}\\
&<\sqrt{m}-2^{2/3}a(m/2)^{1/3}/\sqrt{m}+(\tfrac{6}{20}a^2(\nu/2)^{-1/3}-2)/\sqrt{m}\\
&=\sqrt{m}-2^{1/3}am^{-1/6}+(\tfrac{6}{20}a^2(\nu/2)^{-1/3}-2)/\sqrt{m}.
\end{align*}
Since $(\nu/2)^{-1/3}$ is decreasing in $m$ and $\tfrac{6}{20}a^2(\nu/2)^{-1/3}<2$ for $m=3$ (verified numerically), the last term is negative for all $m\geq 3$ and may thus be discarded to yield our claim for all $m\geq 3.$ The validity of the claim for $m=1,2$ was directly numerically verified. Putting everything together yields the desideratum.
\end{proof}

\begin{lem}
$\lVert g-h\rVert_\infty\leq\frac{\ell R}{\sqrt{\pi m}}V_m\exp(d(K)m\ln 2)\biggl(\frac{R\theta\lambda}{\sqrt{2\pi/e}}\biggr)^{\!\!m}$
\end{lem}
\begin{proof}
In the main article we have already shown that
\begin{align*}
\lVert g-h\rVert_\infty\leq(2\pi)^{-m/2}\lambda^m\lVert\fcirc\rVert_1\P\Bigl\{\lvert\n\rvert\leq\theta\sqrt{m}\Bigr\},
\end{align*}
so all that remains is to bound $\lVert\fcirc\rVert_1$ and $(2\pi)^{-m/2}\lambda^m\P\Bigl\{\lvert\n\rvert\leq\theta\sqrt{m}\Bigr\}.$

By construction $\lVert\fcirc\rVert_\infty=\lVert f\rVert_\infty=M$ and
\begin{align*}
\kcirc\subset K+\{x\in\R^m:\lvert x\rvert\leq\lVert f\rVert_\infty/\ell\}\subset K+\{x\in\R^m:\lvert x\rvert\leq R\},
\end{align*}
because $M\leq\ell R;$ indeed, $K$ having circumradius $R$ means $\diam(K)=2R$ and so $2M=\max f-\min f\leq\ell\diam(K)=2\ell R$ because $f$ is $\ell$-Lipschitz. Ergo $\lVert\fcirc\rVert_1\leq$
\begin{align*}
\lVert\fcirc\rVert_\infty\lvert\kcirc\rvert\leq MV_mR^m\times\frac{\displaystyle\lvert K+\{x\in\R^m:\lvert x\rvert\leq R\}\rvert}{\displaystyle\lvert\{x\in\R^m:\lvert x\rvert\leq R\}\rvert}\leq\ell RV_mR^m\exp(d(K)\ln 2).
\end{align*}
Concerning the latter, $\lvert\n\rvert$ has a chi distribution with $m$ degrees of freedom, so the cdf of $\lvert\n\rvert$ may be expressed as $P(m/2,\diamondsuit^2/2)$ \cite[{\S}8.2(i)]{dlmf}. If $x\geq 0,$ then \cite[(8.6.3)]{dlmf}
\begin{align*}
P(a,x)=\frac{\displaystyle x^a}{\Gamma(a)}\int_0^\infty\exp(-at-xe^{-t})\,dt\leq\frac{\displaystyle x^a}{\Gamma(a)}\int_0^\infty\exp(-at)\,dt=\frac{\displaystyle x^a}{a\Gamma(a)}
\end{align*}
Additionally $a\Gamma(a)>(a/e)^a\sqrt{2\pi a}$ for all $a>0$~\cite[(5.6.1)]{dlmf}, so
\begin{align*}
(2\pi)^{-m/2}\lambda^m\P\Bigl\{\lvert\n\rvert\leq\theta\sqrt{m}\Bigr\}&=(2\pi)^{-m/2}\lambda^m P(m/2,m\theta^2/2)\\
&<(2\pi)^{-m/2}\lambda^m\frac{1}{\sqrt{\pi m}}\biggl(\frac{\displaystyle m\theta^2/2}{m/2e}\biggr)^{\!\!m/2}\\
&=\frac{1}{\sqrt{\pi m}}\biggl(\frac{\theta\lambda}{\sqrt{2\pi/e}}\biggr)^{\!\!m}
\end{align*}
Multiplying the obtained bounds yields the desideratum.
\end{proof}

\begin{lem}
$h=\E(G(\w,\b)\rho(\langle\w,\diamondsuit\rangle+\b))$ on $K,$ where
\begin{itemize}
\item $G(w,b)=-2\sigma R\sqrt{m}\Lambda^2(2\pi)^{-m/2}\lambda^m\lvert F(\Lambda w)\rvert\Psi(w/\sigma)$ $\Bigl[\lvert w\rvert\geq\theta\sigma\sqrt{m}\Bigr]\cos(\Lambda b-\arg F(\Lambda w))$
\item $\b$ being uniformly distributed on $\Bigl[-\sigma R\sqrt{m},\sigma R\sqrt{m}\Bigr]$
\item $\w\sim N(0,\sigma I_m)$
\end{itemize}
\end{lem}
\begin{proof}
Letting $N\sim N(0,\sigma I_m)$ and $\phi(v)=\arg F(v)$ allows us to write
\begin{align*}
(2\pi)^{-m/2}\lambda^m\E\Bigl(\lvert F(\lambda\n)\rvert\Psi(\n)\Bigl[\lvert\n\rvert\geq\theta\sqrt{m}\Bigr]c(\n,x)\Bigr)&=\\
(2\pi)^{-m/2}\lambda^m\int\lvert F(\lambda u)\rvert\Psi(u)\Bigl[\lvert u\rvert\geq\theta\sqrt{m}\Bigr]c(\lambda u,x)\delta_Z(u)\,du&=\\
(2\pi)^{-m/2}\lambda^m\int\lvert F(\Lambda w)\rvert\Psi(w/\sigma)\Bigl[\lvert w\rvert\geq\theta\sigma\sqrt{m}\Bigr]c(\Lambda w,x)\delta_N(w)\,dw&=\\
\int\Bigl((2\pi)^{-m/2}\lambda^m\lvert F(\Lambda w)\rvert\Bigl[\lvert w\rvert\geq\theta\sigma\sqrt{m}\Bigr]\Psi(w/\sigma)\Bigr)\\
\cos(\phi(\Lambda w)+\Lambda\langle w,x\rangle)\Bigl[\lvert\langle w,x\rangle\rvert\leq\sigma R\sqrt{m}\Bigr]\delta_N(w)\,dw,
\end{align*}
since $\lvert\langle w,x\rangle \rvert\leq\lvert w\rvert.\lvert x\rvert$ and the support of $\Psi$ is $\Bigl\{w\in\R^m:\lvert w\rvert\leq\sqrt{m}\Bigr\}.$

We also used that $x\in K\Rightarrow\lvert x\rvert\leq R$ because $K$ has circumradius $R.$

Now, using the fundamental theorem of calculus and integration by parts,
\begin{align*}
\cos(\phi+\Lambda z)\Bigl[\vert z\rvert\leq B\Bigr]&=-\Lambda\int_{-\infty}^z\sin(\phi+\Lambda y)\Bigl[\lvert y\rvert\leq B\Bigr]\,dy\\
&=-\Lambda\int\sin(\phi+\Lambda y)\Bigl[\lvert y\rvert\leq B\Bigr]\dot{\rho}(z-y)\,dy\\
&=-\Lambda^2\int\cos(\phi+\Lambda y)\Bigl[\lvert y\rvert\leq B\Bigr]\rho(z-y)\,dy\\
&=-\Lambda^2\int\cos(\phi-\Lambda b)\rho(z+b)\Bigl[\lvert b\rvert\leq B\Bigr]\,db.
\end{align*}
Upon plugging back in, this yields that
\begin{align*}
h(x)=\iint G(w,b)\rho(\langle w,x\rangle+b)\frac{1}{2\sigma R\sqrt{m}}\Bigl[\lvert b\rvert\leq\sigma R\sqrt{m}\Bigr]\delta_N(w)\,db\,dw,
\end{align*}
which is the expanded form of desired expectation.
\end{proof}

\begin{lem}
Let $N_n=\textstyle\frac{1}{n}\sum_{p=1}^n H_p$ and $t>0.$ Then, for large $n,$
\begin{gather*}
\P\Bigl\{\lVert\E(H)-N_n\rVert_K>t\Bigr\}\lessapprox\\
2\lvert K\rvert\Theta_m(2\pi m)^{-m/2}\biggl(\frac{n(t/b)}{R(1+1/\theta)}\biggr)^{\!\!m}\exp(-(n/2)(t/b)^2).
\end{gather*}
In particular, $\Theta_m\approx m^{3/2}$ (see below), $n\geq 4m(b/\epsilon)^2$ and
\begin{align*}
b=2R^{m+2}\ell\sqrt{m}(2\pi)^{-m/2}\lambda^{m+1}V_m\exp(d(K)\ln 2)(1+1/\theta).
\end{align*}
\end{lem}
\begin{proof}
To prove the desired bound, we rely on Lemma 6 stated and proven below. To apply Lemma \ref{lem:general_sup_norm_prob_bound} we need to bound $H(x)$ uniformly in $x\in K$ and the Lipschitz constant of $H(\omega,\cdot)$ uniformly in $\omega\in\Omega.$ We begin with the latter.

Since $\rho$ is 1-Lipschitz continuous,
\begin{align*}
\lvert H(x)-H(z)\rvert\leq\lvert G(\w,\b)\rvert.\lvert\langle\w,x-z\rangle\rvert\leq\lvert G(\w,\b)\rvert.\lvert\w\rvert.\lvert x-z\rvert
\end{align*}
so the Lipschitz constant of $H$ is bounded by $\lvert G(\w,\b)\rvert.\lvert\w\rvert\leq$
\begin{gather*}
2\sigma R\sqrt{m}\Lambda^2(2\pi)^{-m/2}\lambda^m\lvert F(\Lambda\w)\rvert.\lvert\Psi(\w/\sigma)\rvert.\lvert\w\rvert\\
\leq 2R\sqrt{m}(2\pi)^{-m/2}\lambda^{m+1}\lvert\Lambda\w\rvert.\lvert F(\Lambda\w)\rvert.\lVert\Psi\rVert_\infty\\
\leq 2R\sqrt{m}(2\pi)^{-m/2}\lambda^{m+1}\lVert\Psi\rVert_\infty\Bigl\lVert\lvert\Lambda\diamondsuit\rvert.\lvert F(\Lambda\diamondsuit)\rvert\Bigr\rVert_\infty\\
=2R\sqrt{m}(2\pi)^{-m/2}\lambda^{m+1}\lVert\Psi\rVert_\infty\Bigl\lVert\lvert\diamondsuit\rvert.\lvert F(\diamondsuit)\rvert\Bigr\rVert_\infty
\end{gather*}
Now, using the fact that $\lVert\F\{\diamondsuit\}\rVert_\infty\leq\lVert\diamondsuit\rVert_1$ and utilizing Minkowski's integral inequality twice yields that $\lVert\Psi\rVert_\infty\leq\lVert\psi\rVert_1=1$ because $\psi$ is a pdf, and
\begin{align}
\Bigl\lVert\lvert\diamondsuit\rvert.\lvert F(\diamondsuit)\rvert\Bigr\rVert_\infty&=\Bigl\lVert\lvert\diamondsuit F(\diamondsuit)\rvert\Bigr\rVert_\infty\leq\lvert\{\lVert F(\diamondsuit)\diamondsuit_p\rVert_\infty\}_{p=1}^m\rvert\notag\\
&=\lvert\{\lVert\F\{\partial_p\fcirc\}\rVert_\infty\}_{p=1}^m\rvert\leq\lvert\{\lVert\partial_p\fcirc\rVert_1\}_{p=1}^m\rvert\notag\\
&\leq\Bigl\lVert\lvert\{\partial_p\fcirc\}_{p=1}^m\rvert\Bigr\rVert_1=\lVert\nabla\fcirc\rVert_1\leq\lcirc\lvert\kcirc\rvert,\label{eq:gradient}
\end{align}
where $\lvert\{\diamondsuit_p\}_{p=1}^m\rvert$ denotes the Euclidean norm of the vector $\{\diamondsuit_p\}_{p=1}^m.$ The first equality follows since $F(\diamondsuit)\in\mathbb{C}$ for each $\diamondsuit\in\R^m$ and the last inequality holds because $\fcirc,$ and so $\nabla\fcirc,$ is 0 outside of $\kcirc.$

Previously we have seen that $\lvert\kcirc\rvert\leq V_mR^m\exp(d(K)\ln 2),$ hence the Lipschitz constant of $H$ is bounded by
\begin{align*}
2R^{m+1}\ell\sqrt{m}(2\pi)^{-m/2}\lambda^{m+1}V_m\exp(d(K)\ln 2)\eqcolon k
\end{align*}
In \eqref{eq:gradient} we tacitly used Rademacher's theorem to conclude that $\nabla\fcirc$ exists almost everywhere. We also used that $\lVert\nabla\fcirc\rVert_\infty\leq\ell,$ which can be seen as follows:

It suffices to show that $\lvert\nabla\fcirc(x)\rvert\leq\ell$ for all $x\in\kcirc$ for which $\nabla\fcirc(x)$ exists, since $\fcirc,$ and therefore $\nabla\fcirc,$ is 0 outside of $\kcirc,$ so suppose $x\in\kcirc$ is such that $\nabla\fcirc(x)$ exists. There is nothing to prove if $\nabla\fcirc(x)=0,$ so assume that $\nabla\fcirc(x)\neq 0$ as well. Then $x\in\kcirc$ and $\nabla\fcirc(x)\neq 0,$ so the reverse triangle inequality yields that
\begin{gather*}
\lim_{h\to 0}\biggl\lvert\frac{\lvert\fcirc(x+hu)-\fcirc(x)\rvert}{h}-\lvert\nabla\fcirc(x)\rvert\biggr\rvert\leq\lim_{h\to 0}\biggl\lvert\frac{\fcirc(x+hu)-\fcirc(x)}{h}-\lvert\nabla\fcirc(x)\rvert\biggr\rvert\\
=\lim_{h\to 0}\frac{\lvert\fcirc(x+hu)-\fcirc(x)-\langle hu,\nabla\fcirc(x)\rangle\rvert}{\lvert hu\rvert}=0,
\end{gather*}
where $u=\frac{\nabla\fcirc(x)}{\lvert\nabla\fcirc(x)\rvert}$ and $h>0.$ As such,
\begin{align*}
\lvert\nabla\fcirc(x)\rvert=\lim_{h\to 0}\frac{\lvert\fcirc(x+hu)-\fcirc(x)\rvert}{h}\leq\ell,
\end{align*}
because $u$ is a unit vector.

Digression finished, we now bound $H(x)$ uniformly in $x\in K.$
\begin{gather*}
\lvert H(x)\rvert=\lvert G(\w,\b)\rho(\langle\w,x\rangle+\b)\rvert\\
\leq 2\sigma R\sqrt{m}\Lambda^2(2\pi)^{-m/2}\lambda^m\lvert F(\Lambda\w)\rvert.\lvert\Psi(\w/\sigma)\rvert\Bigl(\langle\w,x\rangle+\sigma R\sqrt{m}\Bigr)\Bigl[\lvert\w\rvert\geq\theta\sigma\sqrt{m}\Bigr]\\
\leq 2\sigma R\sqrt{m}\Lambda(2\pi)^{-m/2}\lambda^m\Bigl(\Lambda\lvert F(\Lambda\w)\rvert\Bigr)R(1+1/\theta)\lvert\w\rvert\leq kR(1+1/\theta)\eqcolon b.
\end{gather*}
Applying Lemma \ref{lem:general_sup_norm_prob_bound} and simplifying a little bit yields the desideratum.
\end{proof}

\subsection{Lemma \ref{lem:general_sup_norm_prob_bound}}

Before stating and proving Lemma \ref{lem:general_sup_norm_prob_bound}, we need some preparation. Beginning with some notation, let $B(\delta)=\{x:\lvert x\rvert<\delta\}$ and let $N_S^\delta$ be the $\delta$-covering number of $S\subset\R^m$ by Euclidean balls \cite[Def.\ 4.2.2]{vershynin}. A corollary of \cite[Thm 7.1.1]{boroczky} is that
\begin{prop}\label{prop:boroczky}
Let $K\subset\R^m$ be a convex body. For all $\delta\in(0,\rho),$ where $\rho$ is the maximal radius of any ball contained in $K,$
\begin{align*}
\theta_m\lvert K\rvert(1-\delta/\rho)^m\leq N_K^\delta\lvert B(\delta)\rvert\leq\theta_m\lvert K\rvert(1+\delta/\rho)^m
\end{align*}
and $\theta_m$ is a constant depending on $m$ only. Ergo, $N_K^\delta\lvert B(\delta)\rvert\approx\theta_m\lvert K\rvert$ for small $\delta.$
\end{prop}
It is known that $m\preccurlyeq\theta_m\preccurlyeq m\ln m$ with reasonable constants \cite[Thm 8.2.1]{boroczky}.
\begin{lem}\label{lem:general_sup_norm_prob_bound}
Let $K\subset\R^m$ be a convex body and let $(\Omega,\Sigma,\P)$ be a probability space that accommodates $n$ iid copies of $H,$ denoted $\{H_p\}_{p=1}^n,$ where $H:\Omega\times K\to\mathbb{C}$ is such that $H(\omega,\cdot)$ is $k$-Lipschitz continuous for almost every $\omega\in\Omega$ and $H(x)$ is a random variable bounded by $b>0$ for every $x\in K.$ While $n\geq 4m(b/\epsilon)^2$ for some $\epsilon>0,$ we have that
\begin{align*}
\P\Bigl\{\bigl\lVert\mathds{E}(H)-\frac{1}{n}\sum_{p=1}^n H_p\bigr\rVert_K>\epsilon\Bigr\}\leq 2N_K^\delta\delta^m(k\sqrt{e}/m)^m\zeta^m e^{-(\epsilon/4)\zeta}
\end{align*}
where $\delta,\zeta$ are defined by
\begin{align}
k\delta=\frac{m}{\zeta}\text{ with }\zeta=(\epsilon n/b^2)\biggl(1+\sqrt{\displaystyle 1-4m(b/\epsilon)^2/n}\biggr)\label{eq:kdelta}
\end{align}
Note that $\zeta\approx 2\epsilon n/b^2$ for large $n,$ so in this r{\'e}gime we have that
\begin{align*}
\P\Bigl\{\lVert\mathds{E}(H)-\frac{1}{n}\sum_{p=1}^n H_p\bigr\rVert_K>\epsilon\Bigr\}\lessapprox 2\lvert K\rvert\Theta_m(2\pi m)^{-m/2}(\epsilon kn/b^2)^m\exp(-(n/2)(\epsilon/b)^2)
\end{align*}
where $\Theta_m\coloneq\theta_m\sqrt{\pi}(m^3+m^2+\tfrac{1}{2}m+\tfrac{1}{30})^{1/6}$
\end{lem}
\begin{rem}
By the triangle inequality, $\mathds{E}(H)$ is also $k$-Lipschitz, so $K$ being separable (as a subset of a separable metric space) yields that
\begin{center}
$\bigl\lVert\mathds{E}(H)-\frac{1}{n}\sum_{p=1}^n H_p\bigr\rVert_K$ is really a countable supremum and thus measurable.
\end{center}
\end{rem}
Let $B_z^\delta$ be the (separable) metric ball in the metric space $K$ with centre $z$ and radius $\delta.$
\begin{proof}
Let $N$ be a minimal $\delta$-net of $K$ with corresponding covering number $N_K^\delta.$ Then
\begin{align*}
\P\Bigl\{\bigl\lVert\mathds{E}(H)-\frac{1}{n}\sum_{p=1}^n H_p\bigr\rVert_K>\epsilon\Bigr\}\leq\sum_{z\in N}\P\Bigl\{\bigl\lVert\mathds{E}(H)-\frac{1}{n}\sum_{p=1}^n H_p\bigr\rVert_{B_z^\delta}>\epsilon\Bigr\}
\end{align*}
Since $\mathds{E}(H)-\frac{1}{n}\sum_{p=1}^n H_p$ is $2k$-Lipschitz,
\begin{align*}
\bigl\lvert\mathds{E}(H(z))-\frac{1}{n}\sum_{p=1}^n H_p(z)\bigr\rvert\geq\bigl\lvert\mathds{E}(H(x))-\frac{1}{n}\sum_{p=1}^n H_p(x)\bigr\rvert-2k\delta
\end{align*}
for all $x\in B_z^\delta$ and $z\in N,$ so for all $z\in N,$
\begin{align*}
\P\Bigl\{\bigl\lVert\mathds{E}(H)-\frac{1}{n}\sum_{p=1}^n H_p\bigr\rVert_{B_z^\delta}>\epsilon\Bigr\}\leq\P\Bigl\{\bigl\lvert\mathds{E}(H(z))-\frac{1}{n}\sum_{p=1}^n H_p(z)\bigr\rvert>\epsilon-2k\delta\Bigr\}
\end{align*}
Since $n\geq 4m(b/\epsilon)^2\Rightarrow k\delta\leq\epsilon/4<\epsilon/2,$ Hoeffding's inequality yields that
\begin{align*}
\P\Bigl\{\bigl\lvert\mathds{E}(H(z))-\frac{1}{n}\sum_{p=1}^n H_p(z)\bigr\rvert>\epsilon-2k\delta\Bigr\}\leq 2\exp(-\tfrac{1}{2}(n/b^2)(\epsilon-2k\delta)^2)
\end{align*}
so all in all we have that
\begin{align*}
\P\Bigl\{\bigl\lVert\mathds{E}(H)-\frac{1}{n}\sum_{p=1}^n H_p\bigr\rVert_K>\epsilon\Bigr\}\leq 2N_K^\delta(k\delta)^m(k\delta)^{-m}\exp(-2(n/b^2)(\epsilon/2-k\delta)^2)
\end{align*}
One can verify that \eqref{eq:kdelta} locally minimizes $(k\delta)^{-m}\exp(-2(n/b^2)(\epsilon/2-k\delta)^2)$ at
\begin{align*}
(\sqrt{e}/m)^m\zeta^m e^{-(\epsilon/4)\zeta}
\end{align*}
which readily yields that $\P\Bigl\{\bigl\lVert\mathds{E}(H)-\frac{1}{n}\sum_{p=1}^n H_p\bigr\rVert_K>\epsilon\Bigr\}\lessapprox$
\begin{align*}
2N_K^\delta\lvert B(\delta)\rvert\frac{\displaystyle(\sqrt{e}/m)^m}{\lvert B(1)\rvert}(\epsilon kn/b^2)^m\exp(-(n/2)(\epsilon/b)^2)
\end{align*}
Now, from \cite{alzer2003ramanujan} we know that
\begin{align*}
\frac{1}{\lvert B(1)\rvert}=\pi^{-m/2}\Gamma(\tfrac{m}{2}+1)<\sqrt{\pi}(m^3+m^2+\tfrac{1}{2}m+\tfrac{1}{30})^{1/6}\biggl(\frac{m}{2\pi e}\biggr)^{\!\!m/2}
\end{align*}
so, by Proposition \ref{prop:boroczky}, for large $n,$
\begin{align*}
N_K^\delta\lvert B(\delta)\rvert\frac{\displaystyle(\sqrt{e}/m)^m}{\lvert K(1)\rvert}\lessapprox\lvert K\rvert\Theta_m(2\pi m)^{-m/2}\tag*{\qedhere}
\end{align*}
\end{proof}

\subsection{Main Theorem simplifications}

We will show that $2m(k+m+2)\ln(1/\epsilon)(b/\gamma\epsilon)^2\leq\eqref{eq:n_bound}.$ 

Since $(k+m+2)\ln(1/\epsilon)\geq 2>1,$ we may indeed plug \eqref{eq:n_bound} into \eqref{eq:bound_before_decreasing} at the cost of increasing the bound. We trust the reader to verify that
\begin{gather}
2m(k+m+2)\ln(1/\epsilon)(b/\gamma\epsilon)^2=\notag\\
(k+m+2)(\ell R/\epsilon)^{2(m+3+1/m)}\exp(d(K)4\ln 2)\ln(1/\epsilon)(1+\theta)^2\notag\\
\begin{gathered}\label{eq:before_bound}
e\pi(\pi m)^{-1/m}\biggl(4V_m^{1+1/m}\biggl(1+\frac{m+1}{m(m+2)}\biggr)^{\!\!m+2}\\
(m^2+3m+1)^{1+1/m}\Bigl((2-2^{1/3}am^{-2/3})\sqrt{m/2\pi}\Bigr)^{\!m+2}\biggr)^{\!\!2}
\end{gathered}
\end{gather}
Comparing with \eqref{eq:n_bound}, it thus suffices to prove that
\begin{align*}
\eqref{eq:before_bound}\leq\frac{8}{\pi}\biggl(\frac{5}{\pi}(2-2^{1/3}am^{-2/3})^2(m^2+3m+1)\exp(3-2^{-2/3}am^{1/3})\biggr)^{\!\!2}(4e)^m
\end{align*}
It follows from (4.5.13), (5.6.1) and (5.19.4) in \cite{dlmf} that
\begin{align*}
e\pi V_m^{2+2/m}&\leq\frac{1}{2\pi}(\pi m)^{-1/m}\biggl(\frac{2e\pi}{m}\biggr)^{\!\!m}\\
\biggl(1+\frac{m+1}{m(m+2)}\biggr)^{\!\!m+2}&\leq\exp(1+1/m)\\
(2-2^{1/3}am^{-2/3})^m&\leq 2^m\exp(-2^{-2/3}am^{1/3})
\end{align*}
Applying all three inequalities above and tidying up a bit yields that
\begin{gather*}
\eqref{eq:before_bound}\leq\frac{8}{\pi}\biggl(\biggl(\frac{m^2+3m+1}{\pi m/e}\biggr)^{\!\!1/m}(2-2^{1/3}am^{-2/3})^2\\
(m^2+3m+1)\exp(2-2^{-2/3}am^{1/3})\biggr)^{\!\!2}(4e)^m
\end{gather*}
so we are done if $(0,\infty)\ni s\mapsto\biggl(\frac{s^2+3s+1}{\pi s/e}\biggr)^{\!\!1/s}$ is decreasing (since $\mathbb{N}\ni m).$

Since $\ln\diamondsuit$ is increasing, we may equivalently check whether
\begin{align}
(0,\infty)\ni s\mapsto\frac{1}{s}\Bigl(1+\ln(s^2+3s+1)-\ln(\pi s)\Bigr)\label{eq:ln_decreasing}
\end{align}
is decreasing. Differentiating \eqref{eq:ln_decreasing} with respect to $s,$ this is tantamount to
\begin{align*}
s^{-2}\biggl(\frac{s(3s+2)}{s^2+3s+1}-2-\ln(s^2+3s+1)+\ln(\pi s)\biggr)<0
\end{align*}
for all $s>0.$ Since $\frac{s(3s+2)}{s^2+3s+1}-2=-\frac{3s+2}{s^2+3m+1}<0$ for all $s>0,$
\begin{align*}
\ln(\pi s)<\ln(s^2+3s+1)\Leftrightarrow s^2+3s+1>\pi s\Leftrightarrow s^2+(3-\pi)s+1>0
\end{align*}
for all $s>0$ implies \eqref{eq:ln_decreasing} is decreasing. The discriminant of $s\mapsto s^2+(3-\pi)s+1$ factors as $(\pi-5)(\pi-1)<0,$ so we are done.


\bibliographystyle{plain}
\bibliography{rvfl}












\end{document}